\begin{document}
\frontmatter          % for the preliminaries
\pagestyle{headings}  % switches on printing of running heads
\addtocmark{Data stream mining} % additional mark in the TOC
\vspace{1cm}

\mainmatter

\title{Discriminative Subnetworks with Regularized Spectral Learning for Global-state Network Data}

\author{Xuan Hong Dang, Ambuj K. Singh,  Petko Bogdanov, Hongyuan You and Bayyuan Hsu}
\institute{ Department of Computer Science, University of
California Santa Barbara, USA
\email{\{xdang,ambuj,petko,hyou,soulhsu\}@cs.ucsb.edu} }

\maketitle \vspace{-5ex}
\begin{abstract}
Data mining practitioners are facing challenges from data with
network structure. In this paper, we address a specific class of
\textit{global-state} networks which comprises of a set of network
instances sharing a similar structure yet having different values
at local nodes. Each instance is associated with a global state 
which indicates the occurrence of an event. The objective is
to uncover a small set of discriminative subnetworks that can
optimally classify global network values. Unlike most existing
studies which explore an exponential subnetwork space, we address
this difficult problem by adopting a space transformation
approach. Specifically, we present an algorithm that optimizes a
constrained dual-objective function to learn a low-dimensional
subspace that is capable of discriminating networks labelled by
different global states, while reconciling with common network
topology sharing across instances. Our algorithm takes an
appealing approach from spectral graph learning and
we show that the globally optimum solution can be achieved via
matrix eigen-decomposition. \vspace{-2ex}
\end{abstract}

% A category with the (minimum) three required fields

%=========================================================================%
\section{Introduction}                                    \label{sec:Intro}
%=========================================================================%
\vspace{-1ex}

With the increasing advances in hardware and software technologies
for data collection and management, practitioners in data mining
are now confronted with more challenges from the collected
datasets: the data are no longer as simple as objects with
flattened representation but now embedded with relationships among
variables describing the objects. This sort of data is often
referred to as \textit{network} or \textit{graph} data. In the
literature, there are a large number of techniques developed to
mine useful patterns from network databases, ranging from frequent
(sub)networks mining~\cite{JiangCZ13}, network
classification/clustering~\cite{AggarwalW10b,KetkarHC09} to
anomaly detection~\cite{AkogluMF10}. Often, even for the same data
mining task, we may need different algorithms to be developed
depending on whether the networks are \textit{directed} or
\textit{indirected}, or whether the data resides at nodes, edges
or both of them~\cite{JiangCZ13}.

In this work, the focus is on a specific class of interesting
networks in which we have a series of network instances that share
a common structure but may have different dynamic values at local
nodes and/or edges. In addition, each network instance is
associated with a global state indicating the occurrence of an
event. Such a class of \emph{global-state} network data can be
used to model a number of real-world applications ranging from
opinion evolution in social networks~\cite{Lee08}, regulatory
networks in biology~\cite{LiL08a} to brain networks in
neuroscience~\cite{DavidsonGCW13}. For example, we possess the
same set of genes (nodes) embedded in regulatory networks. Yet,
research in systems biology shows that the gene expression levels
(node values) may vary across individuals and for some specific
genes, their over-expressions may impact those in the neighbors
through the regulatory network. These local effects may jointly
encode a logical function that determines the occurrence of a
disease~\cite{LiL08a,RanuHS13}. In analyzing these types of
network data, a natural question to be asked is how one can learn
a function that can determine the global-state values of the
networks based on the dynamic values captured at their local nodes
along with the network topology? More specifically, is it possible
to identify a small succinct set of influential discriminative
subnetworks whose local-node values have the maximum impact on the
global states and thus uncover the complex relationships between
local entities and the global-state network properties? In
searching for an answer, obviously, a naive approach would be to
enumerate all possible subnetworks and seek those who have the
most discriminative potential. Nonetheless, as the number of
subnetworks is \textit{exponentially} proportional to the numbers
of nodes and edges, this approach generally is analytically
intractable and might not be feasible for large scale networks. A
more practical approach is to perform heuristic sampling from the
space of subnetworks. Though greatly reducing the number of
subnetworks to be visited, the sampling approaches might still
suffer from suboptimal solutions and might further lose
explanation capability due to the large number of generating
subnetworks.

\vspace{-1ex}

In this paper, we propose a novel algorithm for mining a set of
concise subnetworks whose local-state node values discriminate
networks of different global-state values. Unlike the existing
techniques that directly search through the exponential space of
subnetworks, our proposed method is fundamentally different by
investigating the discriminative subnetworks in a low dimensional
transformed subspace. Toward this goal, we construct on top of the
network database three meta-graphs to learn the network
neighboring relationships. The first meta-graph is built to
capture the network topology sharing across network instances
which serves as the network constraint in our subspace learning
function, whereas the two subsequent ones are build to essentially capture the
relationships between neighboring networks, especially those
located close to the potential discriminative boundary. By this
setting, our algorithm aims to discover a unique low dimensional
subspace to which: i) networks sharing similar global state values
are mapped close to each other while those having different global
values are mapped far apart; ii) the common network topology is
smoothly preserved through constraints on the learning process. In
this way, our algorithm helps to attack two challenging issues at
the same time. It first avoids searching through the original
space of exponential number of subnetworks by learning a single
subspace via the optimization of a single dual-objective function.
Second, our network topology constraint not only matches properly
with our subspace learning function, its quadratic form naturally
imposes the $L_2$-norm shrinkage over the connecting nodes,
resulting in an effective selection of relevant and dominated
nodes for the subnetworks embedded in the induced subspace.
Additionally, the principal technical contributions of our work is
the formulation of our learning objective function that is
mathematically founded on spectral learning and its advantages
therefore not only ensure the stability but also the global
optimum of the uncovered solutions.

In summary, we claim the following contributions: (i)
\textit{Novelty:} We formulate the problem of mining
discriminative subnetworks by transformed subspace learning---an
approach that is fundamentally different from most existing
techniques that address the problem in the original
high-dimensional network space. (ii) \textit{Flexibility:} We
propose a novel dual-objective function along with constraints to
ensure learning of a single subspace in which different global
state networks are well discriminated while smoothly retaining
their common topology. (iii) \textit{Optimality:} We develop a
mathematically sound solution to solve the constrained
optimization problem and show that the optimal solution can be
achieved via matrix eigen-decomposition. (iv) \textit{Practical
relevance:} We evaluate the performance of the proposed technique
on both synthetic and real world datasets and demonstrate its
appealing performance against related techniques in the
literature.

\vspace{-2ex}
%=========================================================================%
\section{Preliminaries and Problem Setting}                               \label{sec:Problem}
%=========================================================================%
\vspace{-2ex}

In this section, we first introduce some preliminaries related to
network data with global state values and then give the definition
of our problem on mining discriminative subgraphs to distinguish
global state networks.

\vspace{-1ex}
\begin{definition} \emph{(Network data instance)} \label{Def1}
\noindent Given $V_i=\{v_1,v_2,\ldots,v_{n_i}\}$ as a set of nodes
and $E_i\subseteq V_i\times V_i$ as a set of edges, each
connecting two nodes $(v_p,v_q)$ if they are known to relate or
influence each other, we define a network instance (or snapshot)
$N_i$ as a quadruple $N_i=(V_i,E_i,L_i,S_i)$ in which $L_i$ is a
function operating on the local states of nodes $L_i: V_i
\rightarrow \mathbb{R}$ and $S_i$ encodes the global network state
of $N_i$.
\end{definition}
\vspace{-2ex}

We consider $N_i$ as an \textit{indirected} network and
values at its local nodes are numerical (both continuous and
binary) while its global state is a discrete value. Since each
$N_i$ is associated with $S_i$ as its state property, $N_i$ is
often referred to as a \textit{global-state} network. For example,
in the gene expression data, each $N_i$ corresponds to a subject
and a local state indicates the gene expression level at node $v_p
\in V_i$ whereas the global state encodes the presence or absence
of the disease, i.e., $S_i\in \{presence, absence\}$. Likewise in
a dynamic social network, a value at each node $v_p$ may encode
the political standpoint of an individual whereas the global state
indicates the overall political viewpoint of the entire community
at some specific time (snapshot). Both local and global states may
change across different network snapshots. Note that, for network instances/snapshots with different structures, we may
use the null value to denote the state of a missing node and
consequently, an edge in a network instance is valid only if it
connects two non-null nodes. 

Now, let us consider a database
consisting $m$ network instances $\mathbb{N}=\{N_1,N_2,\ldots
,N_m\}$, we further define the following network over these
network instances:

\vspace{-1ex}
\begin{definition} \emph{(Generalized network - first meta-graph)} \label{Def2}

\noindent We define the generalized network $N$ as a triple
$N=(V,E,K)$ where $V=V_1\cup V_2 \ldots \cup V_m$ and if $\exists
(v_p,v_q) \in E_i$, such an edge also exists in $E$. For a valid
edge $E(p,q) \in E$, we associate a weight $K(p,q)$ as the
fraction of network instances having edge $E(p,q)$ in their
topology structure,i.e., $K(p,q)=m^{-1}\times \sum_i E_i(p,q)$
with $E_i(p,q)=1$ if there exists an edge between $v_p$, $v_q$ in
network $N_i$. As such, $K(p,q)$ is naturally normalized between
$(0,1]$. The value of 1 means the corresponding edge exists in all
$N_i$'s while a value close to 0 shows that the edge only exists
in a small fraction of network data.
\end{definition}
\vspace{-2ex}

It should be noted here that while we have no edge values at \\
individual networks $N_i$'s, we have non-zero value associated with
each existing edge $E(p,q)$ in the generalized network $N$.
Indeed, the corresponding $K(p,q)$ reflects how frequently there is an edge between
$v_p$ and $v_q$ or equivalently, how strongly is the mutual
influence between two entities $v_p$ and $v_q$ across all
networks. As $N$ is defined based on all network instances, we
also view $N$ as our first meta-graph with $V$ being its vertices
and $K$ capturing its graph topology generalized from the network
topology of all network instances. We are now ready to define our
problem as follows.

\vspace{-1ex}
\begin{definition} \emph{(Mining Discriminative Subnetworks Problem)} \label{Def3}

\noindent Given a database of network data instances/snapshots
$\mathbb{N}=\{N_1,N_2,\ldots,N_m\}$, we aim to learn an optimal
and succinct set of subnetworks with respect to the topology
structure generalized in the first meta-graph that well
discriminate network instances with different global state values.
\end{definition}
\vspace{-2ex}

\vspace{-2ex}
%=========================================================================%
\section{Our approach}                                    \label{sec:Approach}
%=========================================================================%
\vspace{-2ex}
%=========================================================================%
\subsection{Meta-Graphs over Network Instances}           \label{sec:MetaGraph}
%=========================================================================%
\vspace{-1ex} As mentioned in the above sections, searching for
optimal subnetworks in the fully high dimensional original network
space is always challenging and potentially intractable. We adopt
an indirect yet more viable approach by transforming the original
space into a low dimensional space of which networks with
different global-states are well distinguished while concurrently
retaining the generalized network topology captured by our first
meta-graph. Toward this goal, we develop two neighboring
\textit{meta-graphs} based on both the local state values and
global state values.

We denote these two meta-graphs respectively by $G^{+}$ and
$G^{-}$. Their vertices correspond to the network instances while
a link connecting two vertices represents the neighboring
relationship between two corresponding network instances. For the
meta-graph $G^{+}$, we denote $\mathbf{A}^{+}$ as its affinity
matrix that captures the similarity of neighboring networks having
the same global state values. Likewise, we denote $\mathbf{A}^{-}$
as the affinity matrix for meta-graph $G^{-}$ that captures the
similarity of neighboring networks yet having different global
network states. As such, $\mathbf{A}^{+}$ and $\mathbf{A}^{-}$
respectively encode the weights on the vertex-links of two
corresponding graphs $G^{+}$ and $G^{-}$. In computing values for
these affinity matrices, with each given network instance $N_i$,
we find its $k$ nearest neighboring networks based on the local
state values and divide them into two sets, those sharing similar
global state values and those having different global states. More
specifically, let $k$NN($N_i$) be the neighboring set of $N_i$,
then elements of $\mathbf{A}^{+}$ and $\mathbf{A}^{-}$ are
computed as:
$\mathbf{A}^{+}_{ij}= \frac{\mathbf{v}_i \cdot
\mathbf{v}_j}{\|\mathbf{v}_i\|\|\mathbf{v}_j\|}$ if $S_i=S_j$ and
$N_j\in k$NN($N_i$) or $N_i\in k$NN($N_j$), otherwise we set
$\mathbf{A}^{+}_{ij}=0$.
And $\mathbf{A}^{-}_{ij}= \frac{\mathbf{v}_i \cdot
\mathbf{v}_j}{\|\mathbf{v}_i\|\|\mathbf{v}_j\|}$ if $S_i\neq S_j$
and $N_j\in k$NN($N_i$) or $N_i\in k$NN($N_j$), otherwise
$\mathbf{A}^{-}_{ij}=0$.
In these equations, we have denoted the boldface letters
$\mathbf{v}_i$ and $\mathbf{v}_j$ as the vectors encoding the
dynamic local states of $N_i$'s and $N_j$'s nodes, and have used
the cosine distance to define the similarity between two network
instances. It is worth mentioning that, though existing other
measures for network data~\cite{SS2013}, our using of cosine
distance is motivated by the observation that we can view each
node as a single feature and thus the network data can be
essentially considered as a special case of very high dimensional
data. As such, the symmetric cosine measure can be
effectively used though obviously the other ones~\cite{SS2013} can
also be directly applied here.

It is also important to give the intuition behind our above
computation. First, notice that both $\mathbf{A}^{+}$ and
$\mathbf{A}^{-}$ are the affinity matrices having the same size of
$m\times m$ since we calculate for every network instance. Second,
while $\mathbf{A}^{+}$ captures the similarity of network
instances sharing the same global states and neighboring to each
other, $\mathbf{A}^{-}$ encodes the similarity of different global
state networks yet also neighboring to each other. Such networks
are likely to locate close to the discriminative boundary function
and thus they play essential roles in our subsequent learning
function. Third, both $\mathbf{A}^{+}$ and $\mathbf{A}^{-}$ are
sparse and symmetric matrices since only $k$ neighbors are
involved in computing for each network and if $N_j$ is neighboring
to $N_i$, we also consider the inverse relation, i.e., $N_i$ is
neighboring to $N_j$. Moreover, $\mathbf{A}^{-}$ is generally
sparser compared to $\mathbf{A}^{+}$ as the immediate observation
from the second remark.

\vspace{-2ex}%
%=========================================================================%
\subsection{Constrained Dual-Objective Function}           \label{sec:ObjFunc}
%=========================================================================%
\vspace{-1ex}%

Let us recall that $\mathbf{v}_i$ is the vector encoding the node
states of the corresponding network $N_i$ and let us denote the
transformation function that maps $\mathbf{v}_i$ into our novel
target subspace by $f(\textbf{v}_i)$. We first formulate the two
objective functions as follows: \vspace{-3ex}
\begin{align} \label{F-Obj1}
 \operatorname*{arg\,min}_{f} \sum_{i=1}^m\sum_{j=1}^m
(f(\textbf{v}_i)- f(\textbf{v}_j))^2 \mathbf{A}^{+}_{ij}
\end{align}
\vspace{-4ex}
\begin{align} \label{F-Obj2}
 \operatorname*{arg\,max}_{f} \sum_{i=1}^m\sum_{j=1}^m
(f(\textbf{v}_i)- f(\textbf{v}_j))^2 \mathbf{A}^{-}_{ij}
\end{align}
\vspace{-3ex}

To gain more insights into these setting objectives, let us take a closer
look at the first Eq.\eqref{F-Obj1}. If two network instances
$N_i$ and $N_j$ have similar local states in the original space
(i.e., $\mathbf{A}^{+}_{ij}$ is large), this first objective
function will be penalized if the respective points
$f(\textbf{v}_i)$ and $f(\textbf{v}_j)$ are mapped far part in the
transformed space. As such, minimizing this cost function is
equivalent to maximizing the similarity amongst instances having
the same global network states in the reduced dimensional
subspace. On the other hand, looking at Eq.\eqref{F-Obj2} can tell
us that the function will incur a high penalty (proportional to
$\mathbf{A}^{-}_{ij}$) if two networks having different global
states are mapped close in the induced subspace. Thus, maximizing
this function is equivalent to minimizing the similarity among
neighboring networks having different global states in the novel
reduced subspace. As mentioned earlier, such networks tend to
locate close to the discriminative boundary function and hence,
maximizing the second objective function leads to the maximal
margin among clusters of different global-state networks.

Having the mapping function $f(.)$ to be optimized above, it is
crucial to ask which is an appropriate form for it. Either a
linear or non-linear function can be selected as long as it
effectively optimizes two objectives concurrently. Nonetheless,
keeping in mind that our ultimate goal is to derive a set of
succinct discriminative subnetworks along with their
\textit{explicit} nodes. Optimizing a non-linear function is
generally not only more complex but importantly may lose the
capability in explaining how the new features have been derived
(since they will be the
 \textit{non-linear} combinations of the original nodes).
We therefore would prefer $f(.)$ as in the form of a linear
combination function and following this, $f(.)$ can be represented
explicitly as a transformation matrix $U_{n\times d}$ that
linearly combines $n$ nodes into $d$ novel features ($d\ll n$) of
the induced subspace. For the sake of discussion, we elaborate
here for the projection onto 1-dimensional subspace (i.e., $d=1$).
The solution for the general case $d>1$ will be straightforward
once we obtain the solution for this base case. Given this
simplification and with little algebra, we recast our first
objective function as follows:

\vspace{-4ex}
\begin{align} \label{F-Obj11}
 \operatorname*{arg\,min}_{\mathbf{u}} & \sum_{i=1}^m\sum_{j=1}^m
\|\mathbf{u}^T\mathbf{v}_i- \mathbf{u}^T\mathbf{v}_j\|^2
\mathbf{A}^{+}_{ij} =\sum_{i=1}^m\sum_{j=1}^m
tr\left(\mathbf{u}^T(\mathbf{v}_i-\mathbf{v}_j)(\mathbf{v}_i-\mathbf{v}_j)^T
\mathbf{u} \right) \mathbf{A}^{+}_{ij} \nonumber \\
&= tr\left(\sum_{i=1}^m\sum_{j=1}^m
\left(\mathbf{u}^T(\mathbf{v}_i-\mathbf{v}_j)\mathbf{A}^{+}_{ij}(\mathbf{v}_i-\mathbf{v}_j)^T\right)
\mathbf{u} \right) \nonumber \\
&= 2 tr\left(\mathbf{u}^T \mathbf{V} \mathbf{D}^{+} \mathbf{V}^T
\mathbf{u} \right) - 2tr\left(\mathbf{u}^T \mathbf{V}
\mathbf{A}^{+} \mathbf{V}^T\mathbf{u}\right)= 2
tr\left(\mathbf{u}^T \mathbf{V} \mathbf{L}^{+} \mathbf{V}^T
\mathbf{u} \right)
\end{align}
\vspace{-4ex}

\noindent in which we have used $tr(.)$ to denote the trace of a
matrix and $\mathbf{V}$ as the matrix whose column $i$th
accommodates the dynamic local states of network instance $N_i$
(i.e., $\mathbf{v}_i$), forming its size of $n\times m$. Also,
$\mathbf{D}$ is the diagonal matrix whose
$\mathbf{D}^{+}_{ii}=\sum_j \mathbf{A}^{+}_{ij}$ and we have
defined $\mathbf{L}^{+}=\mathbf{D}^{+}-\mathbf{A}^{+}$, which can
be shown to be the Laplacian matrix~\cite{Golub1996}. For the
second objective function in Eq.\eqref{F-Obj2}, we can repeat the
same computation which yields to the following form: \vspace{-2ex}
\begin{align} \label{F-Obj21}
 \operatorname*{arg\,max}_{\textbf{u}} & \sum_{i=1}^m\sum_{j=1}^m
\|\mathbf{u}^T\mathbf{v}_i- \mathbf{u}^T\mathbf{v}_j\|^2 \mathbf{A}^{-}_{ij} \nonumber \\
&= 2 tr\left(\mathbf{u}^T \mathbf{V} \mathbf{D}^{-} \mathbf{V}^T
\mathbf{u} \right) - 2tr\left(\mathbf{u}^T
\mathbf{V} \mathbf{A}^{-} \mathbf{V}^T\mathbf{u}\right) \nonumber \\
&= 2 tr\left(\mathbf{u}^T \mathbf{V} \mathbf{L}^{-} \mathbf{V}^T
\mathbf{u} \right)
\end{align}
\vspace{-3ex}

\noindent where again $\mathbf{D}^{-}$ is the diagonal matrix with
$\mathbf{D}^{-}_{ii}=\sum_j \mathbf{A}^{-}_{ij}$ and we have
defined $\mathbf{L}^{-}=\mathbf{D}^{-}-\mathbf{A}^{-}$.

Notice that while the above formulations aim at discriminating
different global state networks in the low dimensional subspace,
it has not yet taken into consideration the generalized network
structure captured by our first meta-graph. As described
previously, the mutual interactions among nodes are also important
in determining the global network states. Also according to
Definition~\ref{Def2}, the larger the value placing on the link
between nodes $v_p$ and $v_q$, the more likely they are being
involved in the same process. Therefore, we would expect our
mapping vector $\textbf{u}$ not only separating well different
global state networks but also ensuring its smoothness property
w.r.t. the generalized network topology characterized by the first
meta-graph $N$.

Toward the above objective, we formulate the network topology as a
constraint in our learning objective function, and in order to be
consistent with the approach based on spectral graph analysis, we
encode the topology captured in $N$ by an $n\times n$ constraint
matrix $\mathbf{C}$ whose elements are defined by:

\vspace{-3ex}
\begin{align}
 \mathbf{C}_{pq}=\mathbf{C}_{qp} =
  \begin{cases}
   \sum_q K{(p,q)} & \text{if  $v_p \equiv v_q$ }  \\
   - K(p,q) & \text{if $v_p$ and $v_q$ are connected} \\
   0  & \text{otherwise }
  \end{cases}
\end{align}
\vspace{-3ex}

It is easy to show that, by this definition, $\mathbf{C}$ is also
the Laplacian matrix and its quadratic form, taking $\mathbf{u}$
as the vector, is always non-negative: \vspace{-2ex}
\begin{align} \label{F-Obj211}
\mathbf{u}^T\mathbf{C}\mathbf{u} &= \sum_{p=1}^n u_p^2
\sum_{q=1}^n K(p,q)
 - \sum_{p=1}^n \sum_{q=1}^n u_p u_q K(p,q) \nonumber \\
&= \frac{1}{2} \sum_{p=1}^n \sum_{q=1}^n K(p,q)(u_p - u_q)^2 \geq
0
\end{align}
\vspace{-3ex}

\noindent in which $u_p,~u_q$ are components of vector
$\mathbf{u}$. It is possible to observe that if $K(p,q)$ is large,
indicating nodes $v_p$ and $v_q$ are strongly interacted in large
portion of the network instances, the coefficients of $u_p$ and
$u_q$ should be similar (i.e., smooth) in order to minimize this
equation. From the network-structure perspective, we would say
that if $v_p$ is known as a node affecting the global network
state, its selection in the transformed space will increase the
possibility of being selected of its nearby connected node $v_q$
if $K(p,q)$ is large, leading to the formation of discriminative
subnetworks in the induced subspace. Therefore, in combination
with the dual-objective function formulated above, we finally
claim our constrained optimization problem as follows (the
constants can be omitted due to optimization):
\vspace{-3ex}
\begin{align} \label{obj-Func4}
\mathbf{u}^*=& \underset{\mathbf{u}}{\operatorname{arg\,max~}}
\left\{tr\left(\mathbf{u^T V (L^{-}-L^{+})V}^T \mathbf{u}\right)\right\}\nonumber \\
& \text{~~subject to~~} \mathbf{u}^T \mathbf{C u} \leq t \nonumber \\
& \text{~~~~~~~~and ~~}\mathbf{u}^T
\mathbf{V}\mathbf{D}^{+}\mathbf{V}^T \mathbf{u} = 1
\end{align}
\vspace{-4ex}

The first network topology constraint aims to retain the
smoothness property of $\mathbf{u}$ whereas the second constraint
aims to remove its freedom, meaning that we need $\mathbf{u}$'s
direction rather than its magnitude. The network topology
constraint is beneficial in two ways. First as presented above, it
offers a convenient and natural way to incorporate the network
topology into our space transformation learning process. Second,
as being formulated in the vector quadratic form, it essentially
imposes the features/nodes selection through the coefficients of
$\mathbf{u}$ by shrinking those of irrelevant nodes toward zero
while crediting large values to those of relevant nodes. Indeed,
this quadratic $L_2$-norm is a kind of regularization which is
often referred to as the ridge shrinkage in statistics for
regression~\cite{HasTibFri01,DangANZS14}. The parameter $t$ is used to
control the amount of shrinkage. The smaller the value of $t$, the
larger the amount of shrinkage.

\vspace{-3ex}
%=========================================================================%
\subsection{Solving the Function}           \label{sec:Solution}
%=========================================================================%
\vspace{-1ex}

In order to solve our dual objective function associated with
constraints, we resort the Lagrange multipliers method and
following this, Eq.~\eqref{obj-Func4} can be rephrased as follows:
\vspace{-3ex}
\begin{align} \label{obj-F5}
\mathcal{L}(\mathbf{u},\lambda)= & \mathbf{u}^T \left(\mathbf{V
\widetilde{L}V}^T - \alpha \mathbf{C}\right) \mathbf{u} - \lambda
\left(\mathbf{u^T V D V^T u} - 1\right)
\end{align}
\vspace{-2ex}

\noindent of which, to simplify notations, we have denoted
$\mathbf{\widetilde{L}=L^{-}-L^{+}}$, $\mathbf{D}=\mathbf{D}^{+}$
and $\alpha$ is used in replacement for $t$ as there is a
one-to-one correspondence between them~\cite{HasTibFri01}. Taking
the derivative of $\mathcal{L}(\mathbf{u},\lambda)$ with respect
to vector $\mathbf{u}$ yields:

\vspace{-3ex}
\begin{align}
\frac{\partial
\mathcal{L}(\mathbf{u},\lambda)}{\partial\mathbf{u}}= 2
\left(\mathbf{V \widetilde{L}V}^T - \alpha \mathbf{C}\right)
\mathbf{u} - 2\lambda \mathbf{V D V^T u}
\end{align}
\vspace{-3ex}

\noindent And equating it to zero leads to the generalized
eigenvalue problem:

\vspace{-3ex}
\begin{align}
\left(\mathbf{V \widetilde{L}V}^T - \alpha \mathbf{C}\right)
\mathbf{u} = \lambda \mathbf{V D V^T u}
\end{align}
\vspace{-3ex}

It is noticed that $\mathbf{V}$ is a singular matrix and its rank
is at most $\min(n,m)$, making the combined matrix on the right
hand side not directly invertible. We therefore decompose
$\mathbf{VD}^{1/2}$ into $\mathbf{P\Sigma Q}^T$, where columns in
$\mathbf{P}$ and $\mathbf{Q}$ are respectively called the left and
right (orthonormal) singular vector of $\mathbf{VD}^{1/2}$ while
$\mathbf{\Sigma}$ stores its singular values. Note that this
decomposition is always possible since $\mathbf{D}$ is a
non-negative diagonal matrix of node degrees. Additionally, both
$\mathbf{P}$ and $\mathbf{Q}$ can be represented in the square
matrices while $\mathbf{\Sigma}$ a rectangular one of $n\times m$
size according to the most general decomposition form
in~\cite{Dhillon2006}. Following this, the combined matrix on the
right hand size can be rewritten as:

\vspace{-4ex}
\begin{align}
\mathbf{V D V}^T = \mathbf{P\Sigma}^2 \mathbf{P}^T
\end{align}
\vspace{-4ex}

\noindent And in order to get a stable solution, we keep the top
ranked singular values in $\mathbf{\Sigma}$ such as their
summation explains for no less than 95\% of the total singular
values\footnote{Note that since
$(\mathbf{VD}^{1/2})(\mathbf{VD}^{1/2})^T$ is Hermitian and
positive semidefinite, the diagonal entries in $\mathbf{\Sigma}$
are always real and nonnegative.}. Let us denote
$\mathbf{B^{\ast}}=\mathbf{P\Sigma}^{-2} \mathbf{P}^T$ as the
inversion of the right hand side and before showing our optimal
solution, we need the following proposition:

\begin{proposition} \label{Prop1}
Let $\mathbf{P}$ be the matrix of left singular vectors of
$\mathbf{VD^{1/2}}$ defined above, then its row vectors are also
orthogonal, i.e., $\mathbf{P}\mathbf{P}^T=\mathbf{I}$
\end{proposition}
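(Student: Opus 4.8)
The plan is to observe that the real content of the statement is not the orthonormality of the \emph{columns} of $\mathbf{P}$ (which is immediate from the definition of left singular vectors), but the fact that $\mathbf{P}$ is a genuinely \emph{square} orthogonal matrix, for which one-sided and two-sided inverses coincide. So the whole argument reduces to justifying that $\mathbf{P}$ is $n\times n$ and then applying an elementary fact about square matrices.

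First I would recall precisely how $\mathbf{P}$ enters the construction. It collects the left singular vectors of $\mathbf{VD}^{1/2}$, which are exactly the eigenvectors of the $n\times n$ symmetric matrix $(\mathbf{VD}^{1/2})(\mathbf{VD}^{1/2})^T = \mathbf{V D V}^T$. As already noted in the footnote preceding the statement, this matrix is Hermitian and positive semidefinite, so the spectral theorem supplies a \emph{complete} orthonormal basis of $\mathbb{R}^n$ consisting of its eigenvectors. Taking all $n$ of these as the columns of $\mathbf{P}$ — this is exactly the ``most general decomposition form'' cited, in which $\mathbf{P}$ is represented as a square matrix and $\mathbf{\Sigma}$ as the rectangular $n\times m$ factor — makes $\mathbf{P}$ an $n\times n$ matrix whose columns are orthonormal, i.e. $\mathbf{P}^T\mathbf{P}=\mathbf{I}$.

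The second step is the standard linear-algebra fact that a square matrix with orthonormal columns has its transpose as its two-sided inverse. Concretely, from $\mathbf{P}^T\mathbf{P}=\mathbf{I}$ and squareness one gets $\det(\mathbf{P})^2=\det(\mathbf{P}^T)\det(\mathbf{P})=1$, so $\mathbf{P}$ is nonsingular; multiplying $\mathbf{P}^T\mathbf{P}=\mathbf{I}$ on the right by $\mathbf{P}^{-1}$ gives $\mathbf{P}^T=\mathbf{P}^{-1}$, and hence $\mathbf{P}\mathbf{P}^T=\mathbf{P}\mathbf{P}^{-1}=\mathbf{I}$, which is the claimed row-orthogonality.

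The only genuine subtlety — and the step I would treat with the most care — is the first one: one must use the \emph{full} SVD so that $\mathbf{P}$ is square and spans all of $\mathbb{R}^n$, including the null space of $\mathbf{V D V}^T$. If instead $\mathbf{P}$ were the reduced factor keeping only the columns attached to nonzero (or top-ranked) singular values, it would be $n\times r$ with $r<n$, and $\mathbf{P}\mathbf{P}^T$ would be merely the orthogonal projector onto the column space rather than $\mathbf{I}$. Thus the crux is to note that the spectral theorem lets us pad the eigenbasis to a complete orthonormal basis of $\mathbb{R}^n$, after which $\mathbf{P}\mathbf{P}^T=\mathbf{I}$ follows at once and the remaining manipulations are routine.
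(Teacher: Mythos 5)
Your proof is correct, and at its core it rests on the same two facts as the paper's: column orthonormality ($\mathbf{P}^T\mathbf{P}=\mathbf{I}$) plus the squareness of $\mathbf{P}$, from which the one-sided inverse becomes two-sided. The mechanics differ slightly, and in a way that favors your version. The paper argues via an arbitrary vector $\mathbf{a}$: it asserts that $\mathbf{P}^{-1}=\mathbf{P}^T$ and that every $\mathbf{a}$ can be written uniquely as $\mathbf{P}\mathbf{b}$, then computes $\mathbf{P}\mathbf{P}^T\mathbf{a}=\mathbf{P}\mathbf{P}^T\mathbf{P}\mathbf{b}=\mathbf{P}\mathbf{b}=\mathbf{a}$. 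Both of those assertions (invertibility of $\mathbf{P}$, and surjectivity, i.e. solvability of $\mathbf{P}\mathbf{b}=\mathbf{a}$) are exactly the squareness/full-rank hypothesis that the paper leaves implicit, merely citing beforehand that $\mathbf{P}$ ``can be represented in the square matrices.'' You instead make this the centerpiece: you justify squareness from the full SVD (equivalently, the spectral theorem applied to the PSD matrix $\mathbf{V}\mathbf{D}\mathbf{V}^T$, padding the eigenbasis through the null space), and then close the argument with the routine determinant computation $\det(\mathbf{P})^2=1$, so $\mathbf{P}^T=\mathbf{P}^{-1}$ and $\mathbf{P}\mathbf{P}^T=\mathbf{I}$. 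Your remark that a reduced $n\times r$ factor would only give the projector onto the column space is also the right caution, particularly since the paper truncates $\mathbf{\Sigma}$ to the top singular values immediately before this proposition; the statement survives only because $\mathbf{P}$ itself is kept full and square. In short: same approach in substance, but your write-up supplies the justification the paper's proof tacitly assumes.
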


%\noindent \textit{Proof:}
\begin{proof}
Let $\mathbf{a}$ be an arbitrary vector, we need to show
$\mathbf{P}\mathbf{P}^T\mathbf{a}=\mathbf{a}$. Due to the
orthogonal property of left singular vectors, it is true that
$\mathbf{P}^T\mathbf{P}=\mathbf{I}$. The inversion of $\mathbf{P}$
therefore is equal to $\mathbf{P}^T$ and given arbitrary vector
$\mathbf{a}$, there is a uniquely determined vector $\mathbf{b}$
such that $\mathbf{P}\mathbf{b}=\mathbf{a}$. Consequently,

\vspace{-2ex}
$$\mathbf{P}\mathbf{P}^T\mathbf{a}=\mathbf{PP^TP}\mathbf{b}=\mathbf{P}\mathbf{b}=\mathbf{a}$$
\vspace{-2ex}

It follows that $\mathbf{P}\mathbf{P}^T=\mathbf{I}$ since
$\mathbf{a}$ is an arbitrary vector.
\end{proof}
%{\hspace{1pt} \hfill $\square$
%\medskip }

\begin{theorem} \label{Theo1}
Given $\mathbf{B}= \mathbf{P\Sigma}^{2} \mathbf{P}^T$, we have
$\mathbf{BB}^{\ast}=\mathbf{I}$
\end{theorem}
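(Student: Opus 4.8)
The plan is to prove the identity by direct substitution of the two definitions, followed by cancellation using the orthonormality relations for $\mathbf{P}$. First I would write out the product explicitly as
\[
\mathbf{B}\mathbf{B}^{\ast} = \left(\mathbf{P\Sigma}^{2}\mathbf{P}^T\right)\left(\mathbf{P\Sigma}^{-2}\mathbf{P}^T\right),
\]
and then regroup the middle two factors. Since the columns of $\mathbf{P}$ are the orthonormal left singular vectors of $\mathbf{VD}^{1/2}$, we have $\mathbf{P}^T\mathbf{P}=\mathbf{I}$, so the inner $\mathbf{P}^T\mathbf{P}$ collapses to the identity and the expression reduces to $\mathbf{P}\mathbf{\Sigma}^{2}\mathbf{\Sigma}^{-2}\mathbf{P}^T$.

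Next I would observe that $\mathbf{\Sigma}^{2}$ and $\mathbf{\Sigma}^{-2}$ are diagonal matrices formed from the retained squared singular values and their reciprocals; because we keep only the top-ranked (hence strictly positive) singular values, $\mathbf{\Sigma}^{-2}$ is well defined and $\mathbf{\Sigma}^{2}\mathbf{\Sigma}^{-2}=\mathbf{I}$. This leaves $\mathbf{B}\mathbf{B}^{\ast}=\mathbf{P}\mathbf{P}^T$, at which point I would invoke Proposition~\ref{Prop1} to conclude $\mathbf{P}\mathbf{P}^T=\mathbf{I}$, completing the argument.

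The only genuine content sits in that last step, so the main obstacle is precisely Proposition~\ref{Prop1}: the easy relation $\mathbf{P}^T\mathbf{P}=\mathbf{I}$ merely asserts column orthonormality, whereas the row-orthogonality $\mathbf{P}\mathbf{P}^T=\mathbf{I}$ is what upgrades $\mathbf{B}^{\ast}$ from a one-sided pseudo-inverse to a genuine inverse of $\mathbf{B}$ (two-sided, since both $\mathbf{B}$ and $\mathbf{B}^{\ast}$ are symmetric). I would therefore take care to ensure that the truncation retains no zero singular values, so that the diagonal cancellation $\mathbf{\Sigma}^{2}\mathbf{\Sigma}^{-2}=\mathbf{I}$ is legitimate, and lean on Proposition~\ref{Prop1} to guarantee that $\mathbf{P}$ acts as a full orthogonal transformation on the subspace spanned by the retained singular directions. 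With those two points secured, the identity $\mathbf{B}\mathbf{B}^{\ast}=\mathbf{I}$ follows immediately.
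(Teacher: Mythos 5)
Your proposal is correct and follows essentially the same route as the paper: the paper's proof is simply the remark that the result is ``straightforward given Proposition~\ref{Prop1}'', and your expansion---substituting the definitions, collapsing the inner $\mathbf{P}^T\mathbf{P}$ by column orthonormality, cancelling $\mathbf{\Sigma}^{2}\mathbf{\Sigma}^{-2}$, and invoking Proposition~\ref{Prop1} for $\mathbf{P}\mathbf{P}^T=\mathbf{I}$---is precisely that straightforward argument spelled out. Your added care about the truncation retaining only strictly positive singular values is a useful clarification the paper leaves implicit.
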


\begin{proof}
The proof of this theorem is straightforward given
Proposition~\ref{Prop1}.
\end{proof}
\vspace{-1ex}

Now, for simplicity, let us denote $\mathbf{A}$ for the combined
matrix $(\mathbf{V \widetilde{L}V}^T - \alpha \mathbf{C})$, then
it is straightforward to see that $\mathbf{u}$ turns out to be the
eigenvector of the equation: \vspace{-3ex}
\begin{align} \label{Eq-6}
\mathbf{B^{\ast}A} = \lambda \mathbf{u}
\end{align}
\vspace{-3ex}

\noindent with the maximum value is given by the following
theorem.

\begin{theorem} \label{Theo2}
Given matrix $\mathbf{A}=\mathbf{V \widetilde{L}V}^T - \alpha
\mathbf{C}$ and $\mathbf{B}=\mathbf{V D V}^T$ defined above, the
maximum value of $\mathbf{u}^T\mathbf{Au}$ subjected to
$\mathbf{u}^T\mathbf{Bu}=1$ is the largest eigenvalue of
$\mathbf{B^{\ast}A}$.
\end{theorem}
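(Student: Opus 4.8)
The plan is to treat this as the classical maximization of a generalized Rayleigh quotient. I would begin from the Lagrangian $\mathcal{L}(\mathbf{u},\lambda)$ already set up in Eq.~\eqref{obj-F5} and the first-order stationarity condition derived from it, which gives the generalized eigenvalue equation $\mathbf{A}\mathbf{u} = \lambda \mathbf{B}\mathbf{u}$. The first step is to convert this generalized problem into an ordinary eigenvalue problem by left-multiplying both sides with $\mathbf{B}^{\ast}$ and invoking Theorem~\ref{Theo1}: since $\mathbf{B}\mathbf{B}^{\ast}=\mathbf{I}$ and both matrices are symmetric, we also get $\mathbf{B}^{\ast}\mathbf{B}=\mathbf{I}$, so the equation collapses to $\mathbf{B}^{\ast}\mathbf{A}\mathbf{u}=\lambda\mathbf{u}$. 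Thus every constrained stationary point is an eigenvector of $\mathbf{B}^{\ast}\mathbf{A}$, and conversely every eigenvector (suitably normalized) is a candidate extremum.

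The key observation is then to evaluate the objective at such a stationary point. At any pair $(\lambda,\mathbf{u})$ solving $\mathbf{A}\mathbf{u}=\lambda\mathbf{B}\mathbf{u}$ that also satisfies the normalization $\mathbf{u}^T\mathbf{B}\mathbf{u}=1$, I would substitute directly to obtain $\mathbf{u}^T\mathbf{A}\mathbf{u} = \mathbf{u}^T(\lambda\mathbf{B}\mathbf{u}) = \lambda\,\mathbf{u}^T\mathbf{B}\mathbf{u} = \lambda$. Hence the value of the objective at any feasible eigenvector equals exactly its eigenvalue. Since the feasible set is the $\mathbf{B}$-unit sphere restricted to the retained column space (an ellipsoid, hence compact) and the objective is continuous, the maximum is attained, and by the Lagrange necessary conditions every maximizer lies among these eigenvectors. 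Selecting the eigenvector associated with the largest eigenvalue of $\mathbf{B}^{\ast}\mathbf{A}$ therefore yields the maximum of $\mathbf{u}^T\mathbf{A}\mathbf{u}$, which is precisely the claim.

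The main obstacle, and the part I would treat most carefully, is the legitimacy of passing through $\mathbf{B}^{\ast}$, because $\mathbf{B}^{\ast}=\mathbf{P}\mathbf{\Sigma}^{-2}\mathbf{P}^T$ is a truncated pseudo-inverse rather than a genuine inverse of the singular $\mathbf{B}=\mathbf{V}\mathbf{D}\mathbf{V}^T$. This is exactly where Theorem~\ref{Theo1} (and Proposition~\ref{Prop1} beneath it) is needed: the identity $\mathbf{B}\mathbf{B}^{\ast}=\mathbf{I}$ holds on the retained subspace spanned by the kept left singular vectors, so all the manipulations above must be understood as taking place in that subspace, where $\mathbf{B}$ acts invertibly. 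A secondary point worth verifying is that the eigenvalues $\lambda$ are real, so that ``largest eigenvalue'' is well defined; this follows because $\mathbf{B}^{\ast}\mathbf{A}$ is similar to the symmetric matrix $\mathbf{B}^{-1/2}\mathbf{A}\mathbf{B}^{-1/2}$ under conjugation by $\mathbf{B}^{1/2}=\mathbf{P}\mathbf{\Sigma}\mathbf{P}^T$, together with the symmetry of $\mathbf{A}$. Once these two points are settled, the remaining algebra is routine.
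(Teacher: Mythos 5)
Your proposal is correct and takes essentially the same route as the paper's own proof: both pass from the stationarity condition $\mathbf{A}\mathbf{u}=\lambda\mathbf{B}\mathbf{u}$ through Theorem~\ref{Theo1} to the eigenvalue problem for $\mathbf{B}^{\ast}\mathbf{A}$, and then evaluate $\mathbf{u}^T\mathbf{A}\mathbf{u}=\lambda\,\mathbf{u}^T\mathbf{B}\mathbf{u}=\lambda$ at feasible eigenvectors to identify the constrained maximum with the largest eigenvalue. The extra care you take --- attainment of the maximum by compactness, realness of the eigenvalues via similarity to $\mathbf{B}^{-1/2}\mathbf{A}\mathbf{B}^{-1/2}$, and the caveat that $\mathbf{B}\mathbf{B}^{\ast}=\mathbf{I}$ holds only on the retained singular subspace --- are refinements that the paper's terser proof leaves implicit.
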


\begin{proof}
Due to Theorem~\ref{Theo1}, it is straightforward to see that:
$$\mathbf{u}^T\mathbf{Au}= \mathbf{u}^T\mathbf{BB^{\ast}Au}$$
On the other hand, $\mathbf{u}^T\mathbf{BB^{\ast}Au}=
\mathbf{u}^T\mathbf{B}\lambda \mathbf{u}$ by equation
Eq.~\eqref{Eq-6} and further taking into account our second
constraint, it follows that:
$$\label{Eq-7}
\max_{\mathbf{u}:\mathbf{u}^T\mathbf{Bu}=1}
\{\mathbf{u}^T\mathbf{Au}\} = \max \{\lambda\}
$$
%
%\begin{align} \label{Eq-7}
%\max \{\mathbf{u}^T\mathbf{Au}\} = \max
%\{\mathbf{u}^T\mathbf{BB^{\ast}Au}\}=\max \{\lambda
%\mathbf{u}^T\mathbf{B} \mathbf{u}\} =\lambda_1
%\end{align}
%\noindent where $\lambda_i$'s are eigenvalues of
%$\mathbf{B^{\ast}A}$
\end{proof}

From this theorem, it is safe to say that
$\mathbf{u}^*=\mathbf{u}_1$ as the first eigenvector of
$\mathbf{B^{\ast}A}$ corresponding to its largest eigenvalue
$\lambda_1$ is our optimal solution. Since eigenvectors and
eigenvalues go in pair, the second optimal solution is the second
eigenvector $\mathbf{u}_2$ corresponding to the second largest
eigenvalue $\lambda_2$ and so on. Consequently, in the general
case, if $d$ is the number of unique global network states, our
optimal transformed space is the one spanned by the top $d$
eigenvectors. In the next section, we present a method to select
optimal features/nodes along with the subnetworks formed by these
nodes.

\vspace{-2ex}
%=========================================================================%
\subsection{Subnetwork Selection}           \label{sec:subnetwork}
%=========================================================================%
\vspace{-0ex}

In essence, our top $d$ eigenvectors play the role of space
transformation which projects network data from the original high
dimensional space into the induced subspace of $d$ dimensions.
Their coefficients essentially reflect how the original nodes
(features) have been combined or more specifically, the degree of
node's importance in contributing to the subspace that optimally
discriminates network instances. Following the approach adopted
in~\cite{DangMAN13} with $c$ as the user parameter, we
select top $c$ entries in each $\{\mathbf{u}_i\}_{i=1}^d$
corresponding to the selective nodes. Nonetheless, it is possible
that there will be more than $c$ nodes selected by combining from
$d$ eigenvectors. Therefore, in practice, we may use a simple
approach by first selecting the largest absolute entries across
$d$ eigenvectors: 

\vspace{-1ex}
\begin{align} \label{Eq-8}
\mathbf{v}=\{v_1,\ldots, v_n\} \text{ where } v_p=\
\operatorname*{max}_{i} |u_{i,p}|
\end{align}
\vspace{-3ex}

\noindent where $u_{i,p}$ is the $p$-th entry of eigenvector
$\mathbf{u}_i$, and then selecting nodes according to the top $c$
ranking entries in $\mathbf{v}$. The subnetworks forming from
these nodes can be straightforwardly obtained by matching to the
nodes in our generalized network $N$ defined in
Definition~\ref{Def2}, along with their connecting edges stored in
$E$. These subnetworks can be visualized which offers the user an
intuitive way to examine the results.

\vspace{-3ex}
%=========================================================================%
\subsection{Computational Complexity}           \label{sec:Complexity}
%=========================================================================%
\vspace{-2ex}

We name our algorithm \verb"SNL", an acronym stands for SubNetwork
spectral Learning. Its computation complexity is analyzed as
follows. We first need to compute edges' weights according to
Definition~\ref{Def2} to build our first meta-graph which takes
$O(n^2m)$ since there are at most $n(n-1)/2$ edges in the
generalized network $N$. Second, in building the two subsequent
meta-graphs, the cosine distance between any two network instances
is computed which amounts to $O(n^2m)$ or $O(mn\log n)$ in case
the multidimensional binary search tree is used~\cite{Ben75}.
Also, since the size of matrix $\mathbf{VD}^{1/2}$ is $m\times n$,
its singular value decomposition takes $O(mn\log n)$ with the
Lanczos technique~\cite{Golub1996}. Likewise, the
eigen-decomposition of the matrix $\mathbf{B^{\ast}A}$ takes
$O(n^2\log n)$ since its size is $n\times n$. Therefore, in
combination, the overall complexity is at most $O(n^2m+ n^2\log
n)$ assuming that the number of nodes is larger than the number of
network instances.

\vspace{-3ex}
%=========================================================================%
\section{Empirical Studies}           \label{sec:Experiment}
%=========================================================================%
\vspace{-2ex}

%=========================================================================%
\subsection{Datasets and Experimental Setup}           \label{sec:Exp.setup}
%=========================================================================%
\vspace{-2ex}

We compare the performance of \verb"SNL" against
\verb"MINDS"~\cite{RanuHS13} which is among the first approaches
formally addressing the global-state network classification
problem by a subnetwork sampling. Another algorithm for comparison
is the Network Guided Forests (\verb"NGF")~\cite{DutkowskiI11}
designed specifically for protein protein interaction (PPI)
networks. We use both synthetic and real world datasets for
experimentation. Since global states are available in all
datasets, we compare average accuracy in $10$-fold cross
validation for synthetic data, and $5$-fold cross validation for
real data (due to smaller numbers of network instances). For
\verb"SNL", the cross validation is further used to select its
optimal $\alpha$ parameter (shortly discussed below). Unless
otherwise indicated, we set $k=10$ and use the linear-SVM to
perform training and testing in the transformed space (keeping top
50 nodes) in \verb"SNL". We set \verb"MINDS"' parameters as
follows: $10000$ sampling iterations, $0.8$ discriminative
potential threshold and $K=200$ as recommended in the original
paper~\cite{RanuHS13}. The Gini index is used for the tree
building in \verb"NGF" and we set its improvement threshold
$\epsilon=0.02$~\cite{DutkowskiI11}.

\vspace{-2ex}
%=========================================================================%
\subsection{Results on Synthetic Datasets}           \label{sec:SyntheticData}
%=========================================================================%

We use synthetic data to evaluate the performance of our technique
in training robust classifiers and selecting relevant subnetworks.
We generate scale-free backbone networks by preferential
attachment of a predefined size adding $20$ edges for each new
node. The probabilities of backbone edges are sampled from a
truncated Gaussian distributions: $N(0.9,0.1)$ for edges among
\textit{ground truth nodes} (pre-selected nodes of
high-correlation with the network state) and $N(0.7,0.1)$ for the
rest of the edges. The weighted backbone serves as our generalized
template to generate network instances by independently sampling
the existence of every edge based on its probability. The global
states are binary $S_i\in\{0,1\}$ with balanced distribution. We
further add noise to both global and local states of ground truth
nodes, respectively with levels of $10\%$ and $30\%$.

\paragraph{Varying $|V_{gt}|$:}In the first set of experiments, we aim to
test whether the performance of all algorithms is affected by the
number of ground truth nodes. To this end, we generate 5 datasets
by fixing $m=1000$ instances, $n=3000$ nodes and vary the ground
truth nodes $|V_{gt}|$ from 10 to 50. In
Figure~\ref{fig:synGTFeaACC}, we report the average accuracy (and
standard deviation) of all algorithms in 10-fold cross validation.
As one may observe, \verb"SNL" performs stably regardless of the
change in the ground truth sizes. Compared to the other
techniques, its classification is always consistently higher
across all cases. The \verb"MINDS" technique also performs well on
this experimental setting yet the \verb"NGF" seems to be sensitive
to the small ground truth sizes. For small $|V_{gt}|$, the
sampling strategy based on density areas employed in \verb"NGF"
has little chance to select the ground truth nodes, making its
accuracy close to a random technique. When more ground truth nodes
are introduced, \verb"NGF" has higher possibility to sample
high-utility nodes and in the last two datasets, its performance
is on par with that of \verb"MINDS". Nonetheless, its accuracy
only peaks at 73\% in the best case which is lower than 77\% in
\verb"SNL" (last column).

\vspace{-2ex}
\begin{figure}[t]
  \begin{minipage}[b]{0.5\linewidth}
    \centering
    \includegraphics[width=\linewidth]{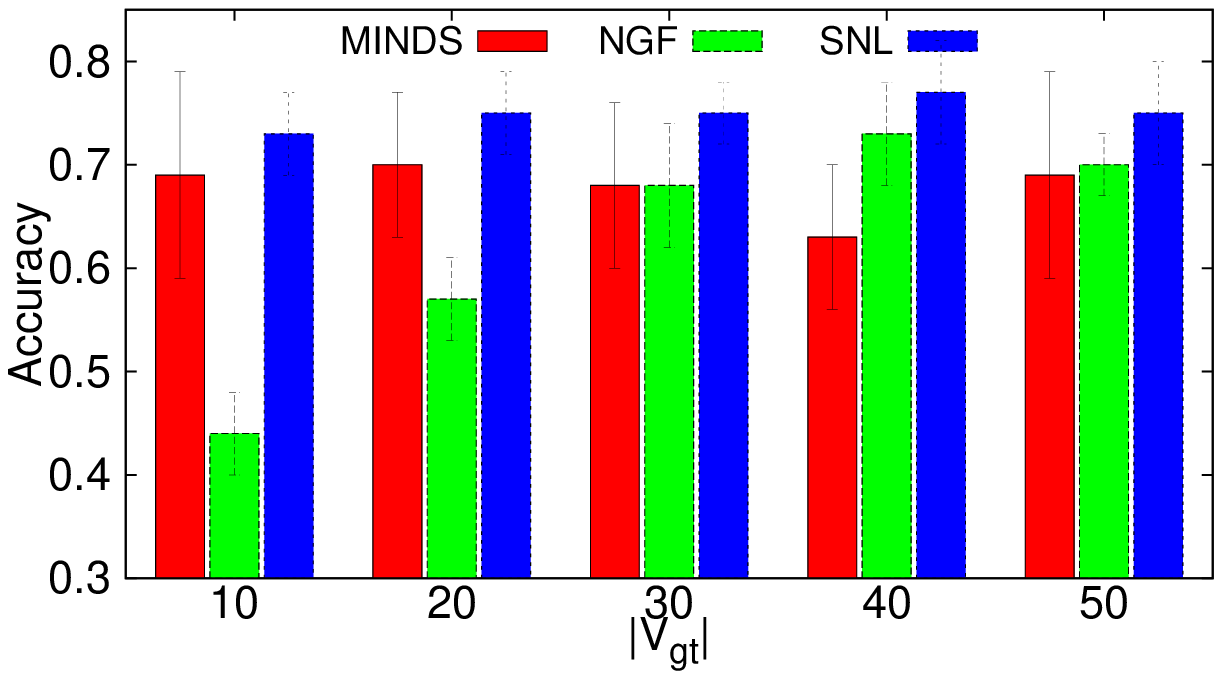}
    \vspace{-6ex}
    \caption{Accuracy of all algorithms by varying ground truth subnetworks' nodes}
    \label{fig:synGTFeaACC}
  \end{minipage}
  \hspace{0.5cm}
  \begin{minipage}[b]{0.5\linewidth}
    \centering
    \includegraphics[width=\linewidth]{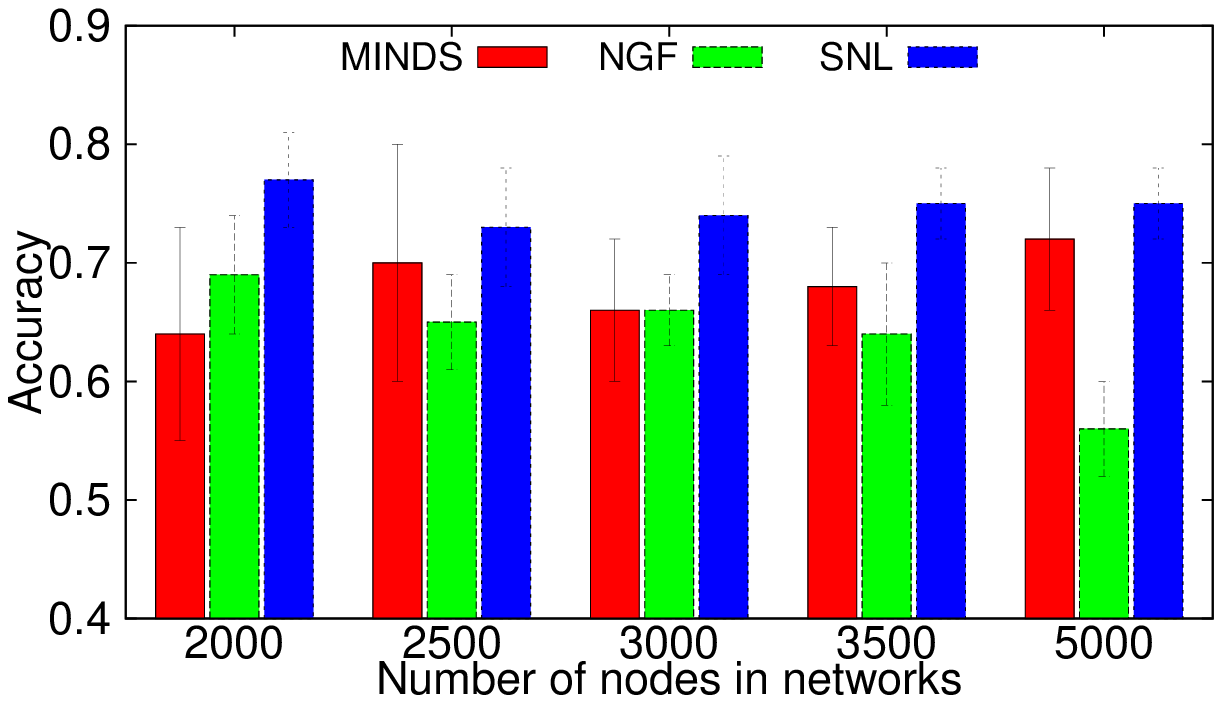}
    \vspace{-6ex}
    \caption{Accuracy of all algorithms by varying network size}
    \label{fig:synFeaACC}
  \end{minipage}
  \vspace{-6ex}
\end{figure}

\vspace{-1ex}
\paragraph{Varying network size:}
In the second set of experiments, we evaluate the performance of
all algorithms by varying the network sizes. Specifically, we fix
$m=3000$ network instances, $|V_{gt}|= 50$ ground truth nodes and
generate 5 datasets having the network size varied from 2000 to
5000 nodes. The classification performance along with the standard
deviation is reported in Figure~\ref{fig:synFeaACC}. It is
possible to see that the performance traits are similar to those
in our first set of experiments. \verb"SNL"'s classification
accuracy remains high while that of \verb"NGF" decreases with the
increase of network size. This again can be explained by the
extension of the searching subnetwork space, leading to the lower
likelihood of both \verb"NGF" and \verb"MINDS" in identifying
relevant subnetworks with potentially discriminative nodes. The
slightly better performance of \verb"MINDS" (compared to
\verb"NGF") is due to its accuracy thresholding in selecting
candidate substructures. The set of \verb"MINDS"' selected trees
are thus qualitatively better. Nonetheless, as compared to
\verb"SNL", our subspace learning approach show more competitive
results. Moreover, since the low-dimensional subspace learnt in
\verb"SNL" is unique and linearly combined from the most
discriminative nodes, its performance also shows more stable,
indicated by the small standard deviation across all cases.

\begin{figure}[t]
\centering
    \subfigure[]{
        \includegraphics[width=0.45\linewidth]{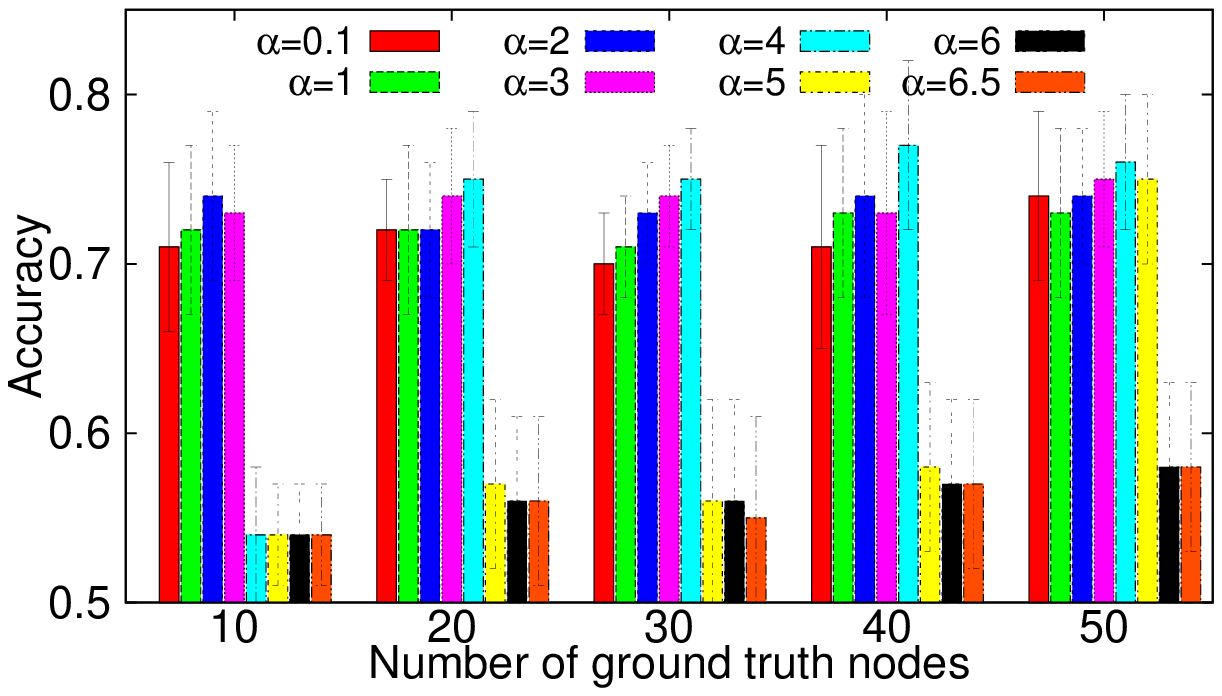}
    \label{fig:synGTFeaACCalpha}
    }
    \subfigure[]{
    \includegraphics[width=0.45\textwidth]{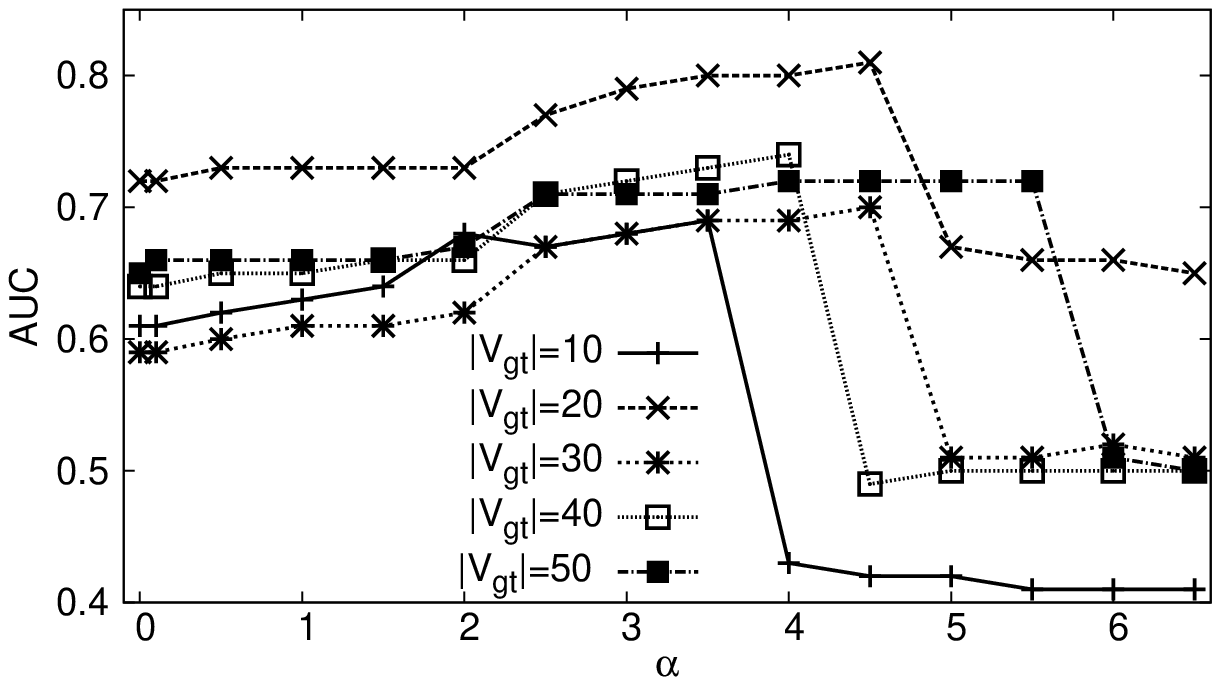}
    \label{fig:synGTFeaAucAlpha}
} \vspace{-4ex} \caption{Network effect on accuracy
\subref{fig:synGTFeaACCalpha} and AUC performance
\subref{fig:synGTFeaAucAlpha} for different numbers of ground
truth nodes.}\label{fig:synGTFeaAlpha} \vspace{-3ex}
\end{figure}

\begin{figure}[t]
\centering
    \subfigure[]{
        \includegraphics[width=0.47\linewidth]{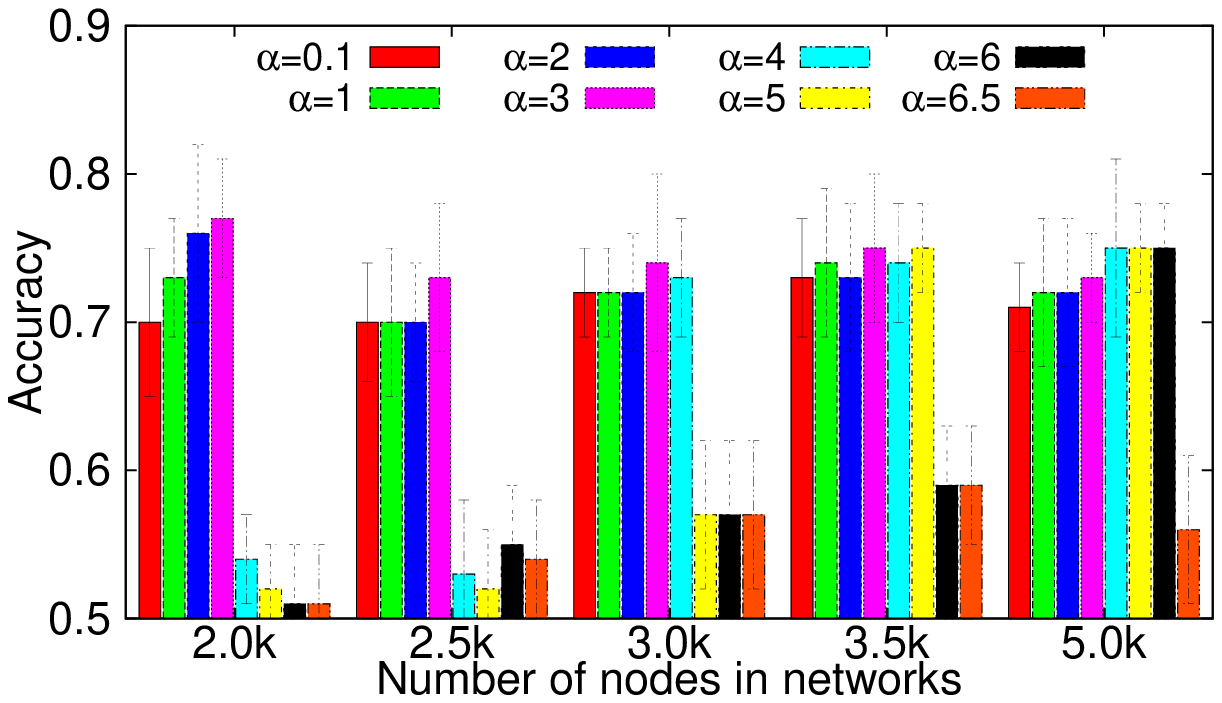}
    \label{fig:synFeaACCalpha}
    }
    \subfigure[]{
    \includegraphics[width=0.47\textwidth]{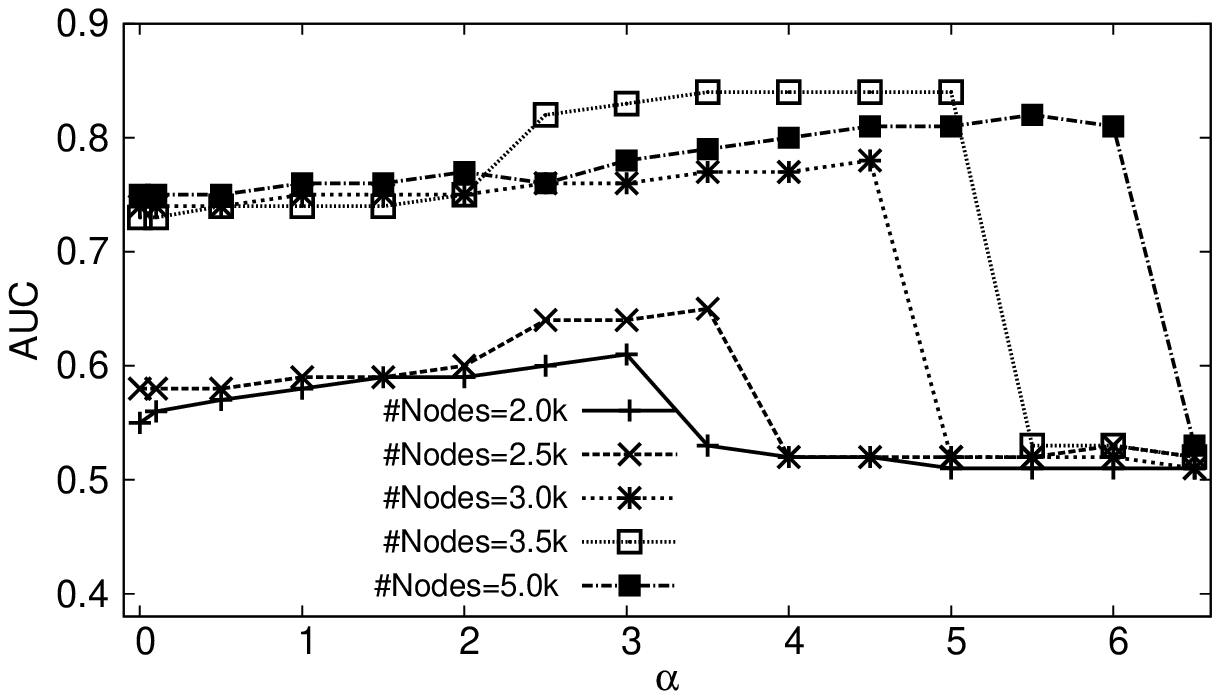}
    \label{fig:synFeaAucAlpha}
} \vspace{-4ex} \caption{Network effect on accuracy
\subref{fig:synGTFeaACCalpha} and AUC
\subref{fig:synGTFeaAucAlpha} for different network
sizes.}\label{fig:synFeaalpha} \vspace{-4ex}
\end{figure}

\vspace{-2ex}
\paragraph{Effect of network topology:} In order to provide more insights into the
performance of \verb"SNL", we further test the network effect. As
presented in Section~\ref{sec:Approach}, $\alpha$ is the parameter
controlling the influence of the network information on the
subspace learning process. The higher the $\alpha$, the more
preference putting on the heavily connected nodes. We report in
Figures~\ref{fig:synGTFeaACCalpha},\ref{fig:synFeaACCalpha} the
accuracy of \verb"SNL" by varying $\alpha$ from $0.1$ to $6.5$ and
in Figures~\ref{fig:synGTFeaAucAlpha},\ref{fig:synFeaAucAlpha} its
ability in discovering the ground truth nodes. For the latter
case, we validate the performance through the usage of area under
the ROC curve (AUC)~\cite{HasTibFri01}.

As expected, incorporating the network structure in the subspace
learning process improves both classification rate and the AUC in
uncovering the ground truth nodes. The plots in
Figures~\ref{fig:synGTFeaACCalpha},\ref{fig:synFeaACCalpha} show
that the accuracy initially improves for increasing influence of
the network ($\alpha\leq5$) and then decreases as the network
component becomes prevalently dominant ($\alpha>5$). This is
because for large $\alpha$, \verb"SNL" tends to incorporate
irrelevant nodes solely based on their strong connections to the
neighbors (yet their local values might not help classifying
global state values). Another notable observation is that, in
larger instances or ground truth feature sets, the optimal
$\alpha$ tends to increase as well. Moreover, the values of
$\alpha$ that maximize classification accuracy also result in
optimal AUC in identifying the ground truth nodes
(Fig.~\ref{fig:synGTFeaAucAlpha},\ref{fig:synFeaAucAlpha}). These
experiments clearly show the helpful information provided by the
network topology in uncovering the groundtruth features. Also, we exclude \verb"NGF" and \verb"MINDS" from these experiments (to save space) and leave the discussion over their AUC performance with the real-world datasets.

%
%%=========================================================================%
%\subsection{Wormbase Case Study}           \label{sec:C.Elegance}
%%=========================================================================%

\vspace{-2ex}
%=========================================================================%
\subsection{Real-world Datasets}           \label{sec:PPI data}
%=========================================================================%
\vspace{-1ex} %Breast, Embryonic, Maize and Liver

We use $4$ real-world datasets to evaluate the performance of
\verb"SNL" and its competing methods. The features in all datasets
correspond to micro-array expression measurements of genes; the
topology structures relating features correspond to gene
interaction networks; and the global network states correspond to
phenotypic traits of the subjects/instances. The statistics of our
datasets are listed in Table~\ref{tbl:network_stats}. Two of our
real-world datasets, breast cancer and embryonic development, were
also used for experimentation in the original \verb"NGF" method
~\cite{DutkowskiI11}. Our other datasets come from a study on
maize properties~\cite{Hui13} and a human liver metastasis
study~\cite{Dong07} combined with a functional
network~\cite{dannenfelser2012genes2fans}. The network samples are
used as provided in the original studies, except for maize where
we down-sample one of the classes to balance the global state
distribution.

\begin{table}[t]
 \caption{Real-world dataset statistics and sources} \vspace{-2ex}
  \label{tbl:network_stats}
   \renewcommand{\arraystretch}{1.3}
    \centering
    {\scriptsize
    \begin{tabular}{ |l|c|c|c|l|}
    \hline
    \textbf{Datasets} & \textbf{Genes} & \textbf{Edges} & \textbf{Instances} & \textbf{Global State} \\
    \hline
    \textbf{Breast cancer} & $11203$ & $57235$ & $295$ & cancer/non-cancer \\ \hline
    \textbf{Embryonic development} & $1321$ & $5227$ & $35$ & developmental tissue layer \\ \hline
    \textbf{Maize} & $8574$ & $298510$ & $344$ & high/low oil production\\ \hline
    \textbf{Liver metastasis} & $7383$ & $251916$ & $123$ & disease/non-disease \\ \hline

    \end{tabular} \vspace{-2ex}
    }
\end{table}

\vspace{-2ex}
\paragraph{Classification performance:}
The comparison of classification accuracy for all techniques and
datasets is presented in Figure~\ref{fig:realACC}. We report the
average accuracy and standard deviation from the 5-fold stratified
cross validation. All techniques perform competitively on the
breast cancer data, achieving more than 70\% of classification
accuracy on average. The accuracy of \verb"SNL" dominates
significantly that of the sampling techniques on the embryonic and
maize datasets (at least $15\%$ and $10\%$ improvement
respectively) and less so in the liver dataset. The separation is
highest in the datasets of small number of instances and big
number of feature nodes -- the settings in which \verb"SNL" is
particularly effective. Beyond average performance improvement,
\verb"SNL"'s accuracy is also more stable across all folds as it
considers the global network structure when learning a subspace
for classification, while the alternatives perform sampling in the
exponential space of substructures.

\begin{figure}[t]
\centering
    \subfigure[]{
    \includegraphics[width=0.34\linewidth]{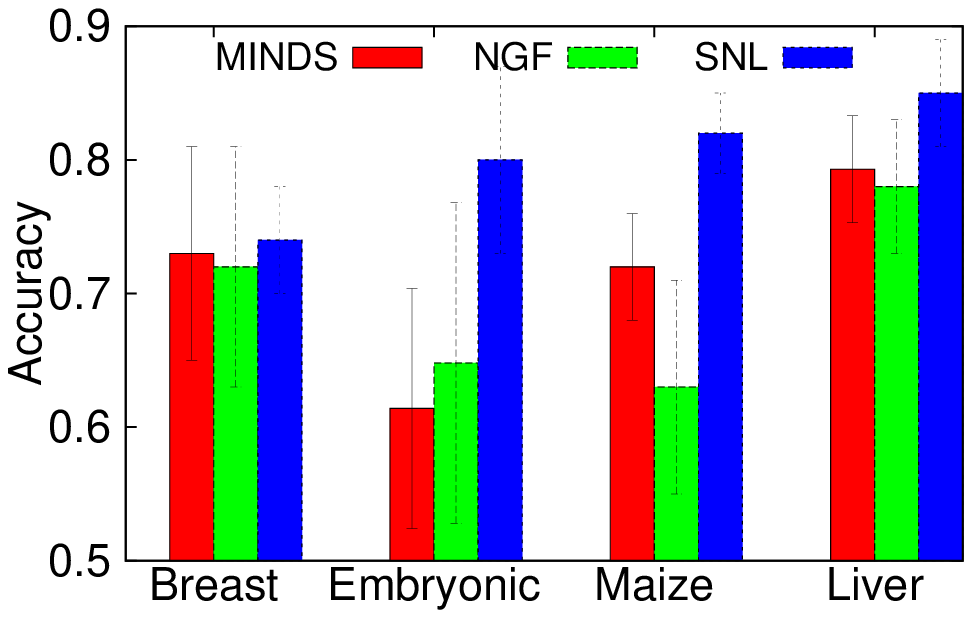}
    \label{fig:realACC}
    } \hspace{-0.5cm}
\subfigure[]{
    \includegraphics[width=0.30\textwidth]{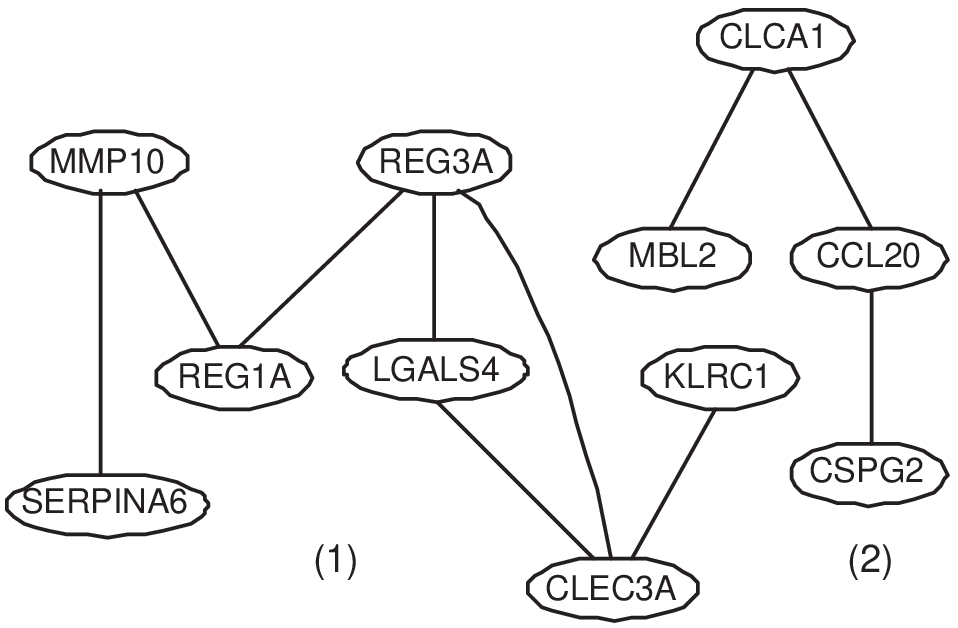}
    \label{fig:LiverSubnet}
}\hspace{-0.4cm}
\subfigure[]{
    \includegraphics[width=0.35\textwidth]{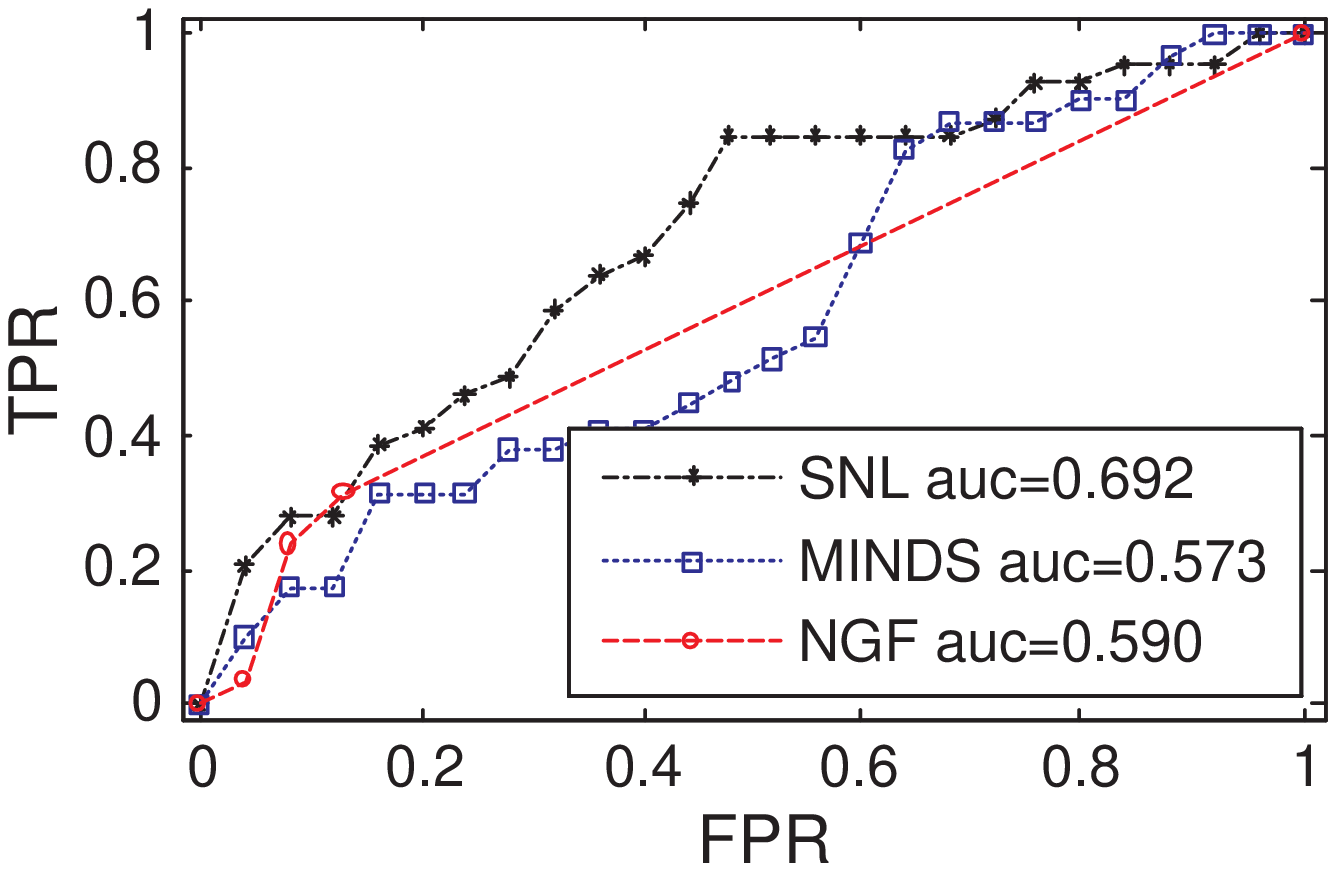}
    \label{fig:liverROC}
} \hspace{-0.5cm} \vspace{-4ex} \caption{\subref{fig:realACC}
Classification performance of all algorithms on four real world
datasets. \subref{fig:LiverSubnet} Subnetworks identified by the
SNL related to the liver metastasis. \subref{fig:liverROC} ROC
performance over the liver metastasis ($x$-axis is false positive
rate, $y$-axis is true positive rate). } \vspace{-4ex}
\end{figure}

\vspace{-3ex}
\paragraph{Subnetwork discovery:}
Unlike the synthetic datasets where we can control the ground
truth network features, it is generally much harder to obtain
ground truth subnetworks for real world datasets. However, as an
attempt to look deeper into the results, we choose the Liver
metastasis and further investigate the meaningful subnetworks
generated by the \verb"SNL". For this dataset, out of top 50 nodes
of highest coefficient values (ref. Section~\ref{sec:subnetwork}),
about one third of the nodes are connected into four subnetworks.
We depict in Figure~\ref{fig:LiverSubnet} the two largest ones
which respectively contain 7 and 4 connected gene nodes. Among
these selected subnetworks, the genes \verb"REG1A" and
\verb"REG3A" are particularly interesting since they are in
agreement with the ones found in~\cite{Dong07} which was shown to
be involved in the liver metastasis cancer. As a comparison
against \verb"MINDS" and \verb"NGF", we notice that both methods
generate multiple binary-trees where each node has only a
\textit{single} parent. Moreover, while \verb"SNL" can provide a
natural rank of important genes based on their coefficients (from
the learnt subspace), it is less trivial to define important genes
from \verb"NGF" and \verb"MINDS" as they both generate thousands
of trees. For the purpose of measuring biological relevance of
obtained genes, we define a ranking for these competing techniques
based on the frequency of genes appeared in the generated trees.
For comparison, we select $46$ metastasis-specific genes
identified in~\cite{Dong07} to serve as a ground truth set (39
intersect with our network and expression data) and plot the ROC
performance of all algorithms in Figure~\ref{fig:liverROC}. Note
that, this is only a partial ground truth set, since identifying
all genes associated with this disease is a subject of ongoing
research~\cite{Dong07}. It is observed that the ranking produced
by \verb"SNL" includes more ground truth genes than those of
\verb"NGF" and \verb"MINDS" at increasing false-positive rates.
The higher true positive rates of \verb"SNL" makes it a better
method for identifying new genes associated with the phenotype of
interest. In practice, this is an important feature of the
algorithm since validating even a single gene related to cancer is
both time-wise and financially costly. As shown in
Figure~\ref{fig:liverROC}, while the ROC performance of \verb"NGF"
and \verb"MINDS" are only at $0.59$ and $0.57$ AUC, that value of
\verb"SNL" is $0.69$ which clearly demonstrates large gap of
better performance.

\vspace{-3ex}
%=========================================================================%
\section{Related Work}           \label{sec:related work}
%=========================================================================%
\vspace{-2ex}

Mining discriminative subspaces from global-state networks is a
novel and challenging problem. Two lines of work close to this
problem are network classification and mining evolving subgraphs
from dynamic network data. In the network classification case,
most representative algorithms are \verb"LEAP"~\cite{YanCHY08},
\verb"graphSig"~\cite{RanuS09}, \verb"GAIA"~\cite{JinYW10} and
\verb"COM"~\cite{JinYW09} which generally assume a database
consisting of positive and negative networks that need to be
classified. These approaches, though diverse in terms of their
underlying algorithms, all aim at extracting a set significant
subnetworks that are \textit{more frequent} in one class of
positive networks and \textit{less frequent} in the negative
class. Different from the above problems, we aim to mine
subnetworks which are represented in all network instances; yet
the node values along with the network structures can discriminate
the global states of the networks. Another line of related
research focuses on mining dynamic evolving
subnetworks~\cite{MongioviBS13,BogdanovFMPRS13,BogdanovMS11}. The
problem in this case is to obtain subnetworks over time that
evolve significantly (outliers) from other network locations. This
setting therefore do not model the problem developed in this paper
since the dynamic network snapshots neither contain global-state
values nor can remove their temporal property.

Several studies in systems biology have indicated the critical
role of the network structure in identifying protein modules
related to clinical outcomes, for both
regression~\cite{LiL10,NoirelSW08,LiL08a} and
classification~\cite{DutkowskiI11,RanuHS13}. In the classification
setting which is related to our study, the
\verb"NGF"~\cite{DutkowskiI11} is an ensemble approach that builds
a forest of trees jointly voting for the class of a network
instance. Resided at the \verb"NGF"'s core is the \verb"CART"
(classification and Regression tree) technique and in order to
build a decision tree within the PPI network, \verb"NGF" starts
with a root node and progressively includes connected nodes as
long as the improvement in class separation (measured by Gini
index) is no smaller than a given threshold. The study
in~\cite{RanuHS13} is the first one to formally introduce the
problem of subnetwork mining in global-state networks and further
propose the \verb"MINDS" algorithm to solve it. Similar to
\verb"NGF", \verb"MINDS" adopts network-constraint decision trees
and is also an ensemble classifier. Nonetheless, it increases the
quality of decision trees by developing a novel concept of editing
map over the space of potential subnetworks and exploits Monte
Carlo Markov Chain sampling over this novel data structure to seek
decision trees with maximum classification potential. Unlike the
frequency-based and sampling classification discussed above, our
approach is fundamentally different as it searches for the most
discriminative subnetworks in a single low dimensional subspace
through the spectral learning technique, which generally leads to
more stable and high-accuracy performance.

\vspace{-2ex}
%=========================================================================%
\section{Conclusion}           \label{sec:conclusion}
%=========================================================================%
\vspace{-1ex}

We proposed a novel algorithm named \verb"SNL" to address the
challenging problem of uncovering the relationship between local
state values residing on nodes and the global network events.
While most existing studies address this problem by sampling the
exponential subnetworks space, we adopt an efficient and effective
subspace transformation approach. Specifically, we define three
meta-graphs to capture the essential neighboring relationships
among network instances and devise a spectral graph theory
algorithm to learn an optimal subspace in which networks with
different global-states are well separated while the common
structure across samples is smoothly respected to enable
subnetwork discovery. Through experimental analysis on synthetic
data and real-world datasets, we demonstrated its appealing
performance in both classification accuracy and the real-world
relevance of the discovered discriminative subnetwork features.

%Nonetheless, \verb"SNL" has some limitations. First, its
%computation is still quadratically proportional to the network
%size as the solution stems from matrix decomposition. This makes
%\verb"SNL" less suitable for very high dimensional networks.
%Second, \verb"SNL" implies an assumption that different classes of
%global-state networks can be \textit{linearly} separable. This
%assumption may not be true if the distribution of network data
%exhibits some non-convex shapes. Though several learning
%techniques based on nonlinear dimensionality
%transformation~\cite{HasTibFri01} can be applied, this however
%still leaves open the difficult question of discovering most
%discriminative subnetworks. We consider these challenges as open
%issues for our future work.

\noindent \textbf{Acknowledgements}: The research work was supported in part by the NSF (IIS-1219254) and the NIH (R21-GM094649).
\bibliographystyle{abbrv}
\bibliography{paper}
\end{document}